\DeclareMathOperator*{\minimize}{minimize}
\newtheorem{theorem}{Theorem}
\title{Learning Stable Deep Dynamics Models}
\author{%
  Gaurav Manek\\
  Department of Computer Science\\
  Carnegie Mellon University\\
  \texttt{gmanek@cs.cmu.edu} \\
  \And
  J. Zico Kolter\\
  Department of Computer Science\\
  Carnegie Mellon University\\
  and Bosch Center for AI \\
  \texttt{zkolter@cs.cmu.edu}
}
\begin{document}

\maketitle

\begin{abstract}
Deep networks are commonly used to model dynamical systems, predicting how the state of a system will evolve over time (either autonomously or in response to control inputs). Despite the predictive power of these systems, it has been difficult to make formal claims about the basic properties of the learned systems. In this paper, we propose an approach for learning dynamical systems that are guaranteed to be stable over the entire state space. The approach works by jointly learning a dynamics model and Lyapunov function that guarantees non-expansiveness of the dynamics under the learned Lyapunov function. We show that such learning systems are able to model simple dynamical systems and can be combined with additional deep generative models to learn complex dynamics, such as video textures, in a fully end-to-end fashion.
\end{abstract}

\section{Introduction}

This paper deals with the task of learning (continuous time) dynamical systems.  That is, given a state at time $t$, $x(t) \in \mathbb{R}^n$ we want to model the time-derivative of the state 
\begin{equation}
\dot{x}(t) \equiv \frac{d}{dt}x(t) = f(x(t))
\end{equation}
for some function $f : \mathbb{R}^n \rightarrow \mathbb{R}^n$. Modeling the time evolution of such dynamical systems (or their counterparts with control inputs $\dot{x}(t) = f(x(t),u(t))$ for $u(t) \in \mathbb{R}^m$) is a foundational problem, with applications in reinforcement learning, control, forecasting, and many other settings.  Owing to their representational power, neural networks have long been a natural choice for modeling the function $f$ \citep{gu2016continuous,nagabandi2018neural,mishra2017prediction,gal2016improving}.  However, when using a generic neural network to model dynamics in this setting, very little can be guaranteed about the behavior of the learned system.  For example, it is extremely difficult to say anything about the \emph{stability} properties of a learned model (informally, the tendency of the system to remain within some invariant bounded set).  While some recent work has begun to consider stability properties of neural networks \citep{chow2018lyapunov,richards2018lyapunov,taylor2019episodic}, it has typically done so by (``softly'') enforcing stability as an additional loss term on the training data. Consequently, they can say little about the stability of the system in unseen states.

In this paper, we propose an approach to learning neural network dynamics that are \emph{provably} stable over the entirety of the state space.  To do so, we jointly learn the system dynamics and a Lyapunov function.  This stability is a hard constraint imposed upon the model: unlike recent approaches, we do not enforce stability via an imposed loss function but build it directly into the dynamics of the model (i.e. Even a randomly initialized model in our proposed model class will be provably stable everywhere in state space).  The key to this is the design of a proper Lyapunov function, based on input convex neural networks \citep{amos2017input}, which ensures global exponential stability to an equilibrium point while still allowing for expressive dynamics.

Using these methods, we demonstrate learning dynamics of physical models such as $n$-link pendulums, and show a substantial improvement over generic networks.  We also show how such dynamics models can be integrated into larger network systems to learn dynamics over complex output spaces.  In particular, we show how to combine the model with a variational auto-encoder (VAE) \citep{kingma2013auto} to learn dynamic ``video textures'' \citep{schodl2000video}.

\section{Background and related work}

\paragraph{Stability of dynamical systems.} Our work primarily considers the setting of autonomous dynamics systems $\dot{x}(t) = f(x(t))$ for $x(t) \in \mathbb{R}^n$. (The methods are applicable to the dynamics with control as well, but we focus on the autonomous case for simplicity of exposition.)  Such a system is defined to be \emph{globally asymptotically stable} (for simplicity, around the equilibrium point $x_e=0$) if we have $x(t) \rightarrow 0$ as $t \rightarrow \infty$ for any initial state $x(0) \in \mathbb{R}^n$; $f$ is \emph{locally asymptotically stable} if the same holds but only for $x(0) \in B$ where $B$ is some bounded set containing the origin.  Similarly, $f$ is \emph{globally (locally, respectively) exponentially stable} (i.e., converges to the equilibrium ``exponentially quickly'') if
\begin{equation}
    \|x(t)\|_2 \leq m \|x(0)\|_2 e^{-\alpha t}
\end{equation}
for some constants $m, \alpha \geq 0$ for any $x(0) \in \mathbb{R}^n$ ($B$, respectively).

The area of Lyapunov theory \citep{khalil2002nonlinear,la2012stability} establishes the connection between the various types of stability mentioned above and descent according to a particular type of function known as a Lyapunov function.  Specifically, let $V : \mathbb{R}^n \rightarrow \mathbb{R}$ be a continuously differentiable positive definite function, i.e., $V(x) > 0$ for $x \neq 0$ and $V(0) = 0$.  Lyapunov analysis says that $f$ is stable (according to the different definitions above), if and only if we can find some function $V$ as above such the \emph{value of this function is decreasing along trajectories generated by $f$}.  Formally, this is the condition that the time derivative $\dot{V}(x(t)) < 0$, i.e.,
\begin{equation}
    \dot{V}(x(t)) \equiv \frac{d}{dt}V(x(t)) = \nabla V(x)^T \frac{d}{dt} x(t) = \nabla V(x)^T f(x(t)) < 0
\end{equation}
This condition must hold for all $x(t) \in \mathbb{R}^n$ or for all $x(t) \in B$ to ensure global or local stability respectively.  Similarly $f$ is globally asymptotically stable if and only if there exists positive definite $V$ such that
\begin{equation}
    \dot{V}(x(t)) \leq -\alpha V(x(t)), \;\; \mbox{ with } c_1 \|x\|_2^2 \leq V(x) \leq c_2 \|x\|_2^2. \label{eqn:req_for_GAS}
\end{equation}
Showing that these conditions imply the various forms of stability is relatively straightforward, but showing the converse (that any stable system must obey this property for some $V$) is relatively more complex.  In this paper, however, we are largely concerned with the ``simpler'' of these two directions, as our goal is to enforce conditions that ensure stability.

\paragraph{Stability of linear systems.}
For a linear system with matrix $A$:
\begin{equation}
    \dot{x}(t) = A x(t)
\end{equation}
it is well-established that the system is stable if and only if the real components of the the eigenvalues of $A$ are all strictly negative ($\mathsf{Re}(\lambda_i(A)) < 0$).  Equivalently, the same same property can be shown via a positive definite quadratic Lyapunov function
\begin{equation}
    V(x) = x^T Q x
\end{equation}
for $Q\succ 0$.  In this case, by Equation~\ref{eqn:req_for_GAS}, the following ensures stability:
\begin{equation}
    \dot{V}(x(t)) = x(t)^T A^T Q x(t) + x(t)^T Q A x(t) \leq -\alpha x(t)^T Q x(t)
\end{equation}
i.e., if we can find a positive definite matrix $Q \succeq I$ with that $A^T Q + Q A + \alpha Q \preceq 0$ negative semidefinite.  Such bounds (and much more complex extensions) for the basis for using linear matrix inequalities (LMIs), as a method to ensure stability of linear dynamical systems.  The methods also have applicability to non-linear systems, and several authors have used LMI analysis to learn non-linear dynamical systems by constraining the linearization of the systems to have global Lyapunov functions \cite{khansari2011learning,blocher2017learning,umlauft2017learning},

The point we want to emphasize from the above discussion, though, is that the task of \emph{learning} even a stable \emph{linear} dynamical system is not a convex problem.  Although the constraints
\begin{equation}
    Q \succeq I, \;\; A^T Q + Q A + \alpha Q \preceq 0
\end{equation}
are convex in $A$ and $Q$ separately, they are not convex in $A$ and $Q$ jointly.  Thus, the problem of jointly learning a stable linear dynamical system and its corresponding Lyapunov function, even for the simple linear-quadratic setting, is \emph{not} a convex optimization problem, and alternative techniques such as alternating minimization need to be employed instead.  Alternatively, past work has also looked at different heuristics, such as approximately projecting a dynamics function $A$ onto the (non-convex) stable set of matrices with eigenvalues $\mathsf{Re}(\lambda_i(A)) < 0$ \citep{boots2008constraint}.

\paragraph{Stability of non-linear systems}  For general non-linear systems, establishing stability via Lyapunov techniques is typically even more challenging.  For the typical task here, which is that of establishing stability of some \emph{known} dynamics $\dot{x}(t) = f(x(t))$, finding a suitable Lyapunov function is often more an art than a science.  Although some general techniques such as sum-of-squares certification \citep{parrilo2000structured,papachristodoulou2002construction} provide general methods for certifying stability of e.g., polynomial systems, these are often expensive and don't easily scale to high dimensional systems.

Notably, our proposed approach here is able to learn provably stable systems without solving this (generally hard) problem.  Specifically, while it is difficult to find a Lyapunov function that certifies the stability of some \emph{known} system, we exploit the fact that it is relatively much easier to \emph{enforce} some function to behave in a stable manner according to a Lyapunov function. 

\paragraph{Lyapunov functions in deep learning}
Finally, there has been a small set of recent work exploring the intersection of deep learning and Lyapunov analysis \citep{chow2018lyapunov,richards2018lyapunov,taylor2019episodic}.  Although related to our work here, the approach in this past work is quite different.  As is more common in the control setting, these papers try to learn neural-network-based Lyapunov functions for control policies, but in way that enforces stability via a loss penalty.  For instance Richards et al., \citep{richards2018lyapunov} optimize a loss function that encourages $\dot{V}(x) \leq 0$ for $x$ in some training set.  In contrast, our work guarantees absolute stability \emph{everywhere} in the state space, not just at a small set of points; but only for a simpler setting where the \emph{entire} dynamics are to be learned (and hence can be ``forced'' to be stable) rather than a stabilizing controller for known dynamics.

\section{Joint learning of dynamics and Lyapunov functions}

The intuition of the approach we propose in this paper is straightforward: instead of learning a dynamics function and attempting to separately verify its stability via a Lyapunov function, we propose to \emph{jointly learn a dynamics model and Lyapunov function, where the dynamics is inherently constrained to be stable (everywhere in the state space) according to the Lyapunov function}.

Specifically, following the principles mentioned above, let $\hat{f} : \mathbb{R}^n \rightarrow \mathbb{R}^n$ denote a ``nominal'' dynamics model, and let $V : \mathbb{R}^n \rightarrow \mathbb{R}$ be a positive definite function: $V(x) \geq 0$ for $x \neq 0$ and $V(0) = 0$.  Then in order to (provably, globally) ensure that a dynamics function is stable, we can simply project $\hat{f}$ such that it satisfies the condition
\begin{equation}
    \nabla V(x)^T \hat{f}(x) \leq -\alpha V(x)
\end{equation}
i.e., we define the dynamics
\begin{equation}
\label{eq:dynamics}
    \begin{split}
    f(x) & =  \mathsf{Proj}\left(\hat{f}(x), \{f: \nabla V(x)^T f \leq -\alpha V(x)\}\right ) \\
    & = \begin{cases} \hat{f}(x) & \mbox{if } \nabla V(x)^T \hat{f}(x) \leq -\alpha V(x) \\
    \hat{f}(x) - \nabla V(x)\frac{\nabla V(x)^T \hat{f}(x) + \alpha V (x)}{\|\nabla V(x)\|_2^2}    
         & \mbox{otherwise} \end{cases} \\
    & = \hat{f}(x) - \nabla V(x)\frac{\mathsf{ReLU}\bigl(\nabla V(x)^T \hat{f}(x) + \alpha V (x) \bigr)}{\|\nabla V(x)\|_2^2}    
 \end{split}
\end{equation}
where $\mathsf{Proj(x;\mathcal{C})}$ denotes the orthogonal projection of $x$ onto the point $\mathcal{C}$, and where the second equation follows from the analytical projection of a point onto a halfspace.  As long as $V$ is defined using automatic differentiation tools, it is straightforward to include the gradient $\nabla V$ terms into the definition of $f$, and our final network can be trained just like any other function.   The general approach here is illustrated in Figure \ref{fig:stable_nn_construction}.

\begin{figure}
    \centering
\begin{tikzpicture}
        \begin{groupplot}[
            group style={
                group name=my plots,
                group size=4 by 1,
                ylabels at=edge left,
                yticklabels at=edge left,
                horizontal sep=12pt
            },
            axis on top,
            width=1.3in,
            height=1.3in,
            scale only axis,
            enlargelimits=false,
            xmin=0,
            xmax=1,
            ymin=0,
            ymax=1,
            yticklabels={,,},
            xticklabels={,,}
            ]

        \nextgroupplot[title={Trajectory and Lyapunov function}]
        \addplot[] graphics[xmin=0,ymin=0,xmax=1,ymax=1] {./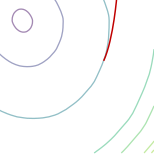};
        \node[anchor=west] (A) at (axis cs:0.1,0.85){\textbullet $x_e$};
        \node (B) at (axis cs:0.65,0.2){increasing $V$};
        \draw[->](A)--(B);
        \node[anchor=north] (C) at (axis cs:0.705,0.64){\textbullet $ x$};
        
        \nextgroupplot[title={Case 1}]
        \addplot[] graphics[xmin=0,ymin=0,xmax=1,ymax=1] {./figures/explanation/process-components.png};
        \node[anchor=north] (C) at (axis cs:0.705,0.64){\textbullet \phantom{$ x$}};
        \node[inner sep=0pt] (A) at (axis cs:0.67,0.6){};
        \node[inner sep=2pt] (B) at (axis cs:0.7,0.1){};
        \draw [->] (A) -- node [midway,right] {$\hat f( x)$} (B);
        \node[inner sep=2pt] (D) at (axis cs:0.4,0.34){};
        \draw [->] (B) -- node [midway,left] {$-g( x)$} (D);
        \draw [->] (A) -- node [midway,left] {$f( x)$} (D);

        \nextgroupplot[title={Case 2}]
        \addplot[] graphics[xmin=0,ymin=0,xmax=1,ymax=1] {./figures/explanation/process-components.png};
        \node[anchor=north] (C) at (axis cs:0.705,0.64){\textbullet \phantom{$ x$}};
        \node[inner sep=0pt] (A) at (axis cs:0.67,0.6){};
        \node[inner sep=2pt] (D) at (axis cs:0.4,0.55){};
        \node[inner sep=2pt] (E) at (axis cs:0.47,0.84){$g( x)$};
        \draw [->] (A) -- node [midway,below] {$f( x) = \hat f( x)$} (D);
        \draw [->] (A) -- (E);
    \end{groupplot}
    \end{tikzpicture}
    \caption{We plot the trajectory and the contour of a Lyapunov function of a stable dynamical system and illustrate our method. Let $g( x) = \frac{\nabla V(x)}{\|\nabla V(x)\|_2^2} \mathrm{ReLU}\left(\nabla V(x)^T \hat{f}(x) + \alpha V (x)\right)$. In the first case $\hat f( x)$ has a component $g( x)$ not in the halfspace, which we subtract to obtain $f( x)$. In the second case $\hat f( x)$ is already in the halfspace, so is returned unchanged.}
    \label{fig:stable_nn_construction}
\end{figure}

\subsection{Properties of the Lyapunov function $V$}\label{sec:lyapunov_properties}

Although the treatment above seems to make the problem of learning stable systems quite straightforward, the sublety of the approach lies in the choice of the function $V$.  Specifically, as mentioned previously, $V$ needs to be positive definite, but additionally $V$ needs to have \emph{no local optima except $0$}.   This is due to Lyapunov decrease condition: recall that we are attempting to guarantee stability to the equilibrium point $x=0$, yet the decrease condition imposed upon the dynamics means that $V$ is decreasing along trajectories of $f$.  If $V$ has a local optimum away from the origin, the dynamics can in theory get stuck in this location; this manifests itself by the $\|\nabla V(x)\|_2^2$ term going to zero, which results in the dynamics becoming undefined at the optima.

To enforce these conditions, we make the following design decisions regarding $V$:

\paragraph{No local optima.} We represent $V$ via an input-convex neural network (ICNN) function $g$ \citep{amos2017input}, which enforces the condition that $g(x)$ be convex in its inputs $x$.  A fairly generic form of such networks consists is given by the recurrence
    \begin{equation}
    \begin{split}
        z_1 & = \sigma_0(W_0 x + b_0) \\
        z_{i+1} & = \sigma_i(U_i z_i + W_i x + b_i), i=1,\ldots,k-1 \\
        g(x) & \equiv z_k
    \end{split}
    \end{equation}
    where $W_i$ are real-valued weights mapping from inputs to the $i+1$ layer activations; $U_i$ are \emph{positive} weights mapping previously layer activations $z_i$ to the next layer;  $b_i$ are real-valued biases;and $\sigma_i$ are \emph{convex, monotonically non-decreasing} non-linear activations, such as the ReLU or smooth variants.  It is straightforward to show that with this formulation, $g$ is convex in $x$ \citep{amos2017input}, and indeed any convex function can be approximated by such networks \citep{chen2018optimal}.  
    
\paragraph{Positive definite.} While the ICNN property can enforce that $V$ have only a single global optima, it does not necessarily enforce that this optima be at $x = 0$.  While one could fix this by e.g., removing the biases term (but this imposes substantial limitations on the representable functions, which can no longer be arbitrary convex functions) or by shifting whatever global minima exists to the origin (but this requires finding the global minimum during training, which itself is computationally expensive), we take an alternative approach and simply shift the function such that $V(0) = 0$, and add a small quadratic regularization term to ensure strict positive definiteness.
    \begin{equation}
        V(x) = \sigma_{k+1}(g(x) - g(0)) + \epsilon \|x\|_2^2.
        \label{eq:V_definition}
    \end{equation}
    where $\sigma_k$ is a \emph{positive} convex non-decreasing function with $\sigma_k(0) = 0$, $g$ is the ICNN defined previously, and $\epsilon$ is a small constant.  These terms together still enforce (strong) convexity and positive definiteness of $V$.  
    
\begin{figure}
\centering
\includegraphics[width=.5\textwidth]{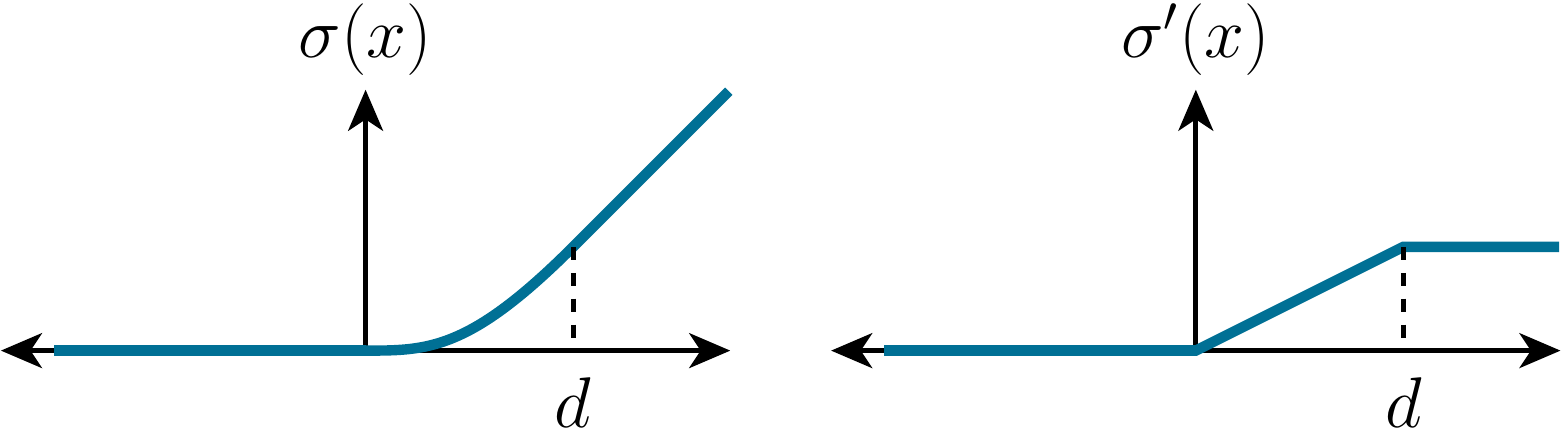}
\caption{Smoothed ReLU, used to make our Lyapunov function continuously differentiable.}
\label{fig:rehu}
\end{figure}

\paragraph{Continuously differentiable.}  Although not always required, several of the conditions for Lyapunov stability are simplified is $V$ is continuously differentiable.  To achieve this, rather than use ReLU activations\footnote{Note that the typical softplus smoothed approximation of the ReLU will not work for all purposes above, since we require an activation with $\sigma(0)=0$}, we use a smoothed version that replaces the purely linear ReLU with a quadratic region in $[0,d]$
    \begin{equation}
        \sigma(x) = \begin{cases} 0 & \mbox {if } x \leq 0 \\
        \nicefrac{x^2}{2d} & \mbox{if } 0 < x < d \\
        x - \nicefrac{d}{2} & \mbox{otherwise} \end{cases}.
    \end{equation}
    An illustration of this activation is shown in Figure \ref{fig:rehu}.
    
\paragraph{(Optional) Warped input space.}  Although convexity ensures that the Lyapunov function have no local optima, this is a sufficient but not necessary condition, and indeed requiring a strongly convex Lyapunov function may impose too strict a requirement upon the learned dynamics.  For this reason, the input to the ICNN function $g(x)$ above can be optionally preceded by any continuously differentiable \emph{invertible} function $F : \mathbb{R}^n \times \mathbb{R}^n$, i.e., using
    \begin{equation}
        V(x) = \sigma_{k+1}(g(F(x)) - g(F(0))) + \epsilon \|x\|_2^2.
        \label{eq:V_definition2}
    \end{equation}
    as the Lyapunov function.  Invertibility ensures that the sublevel sets of $V$ (which are convex sets, by definition) map to contiguous regions of the composite function $g \circ F$, thus ensuring that no local optima exist in this composed function.

With these conditions in place, we have the following result.
\begin{theorem}
The dynamics defined by
\begin{equation}
    \dot{x} = f(x)
\end{equation}
defined by $f$ from \eqref{eq:dynamics} and $V$ from \eqref{eq:V_definition} or \eqref{eq:V_definition2} are globally exponentially stable to the equilibrium point $x=0$, for any (bounded weight) networks defining the $\hat{f}$ and $V$ functions.
\end{theorem}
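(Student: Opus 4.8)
The plan is to show that $V$ from \eqref{eq:V_definition} (or \eqref{eq:V_definition2}) meets the hypotheses of the sufficient (``easy'') direction of Lyapunov's theorem recalled around \eqref{eqn:req_for_GAS}, that the projection in \eqref{eq:dynamics} forces the decrease condition $\dot V\le-\alpha V$ at \emph{every} point, and then to apply the comparison lemma. First I would record the structural properties of $V$: it is the sum of a convex function---an ICNN $g$ post-composed with the monotone convex $\sigma_{k+1}$ and shifted so its value at the origin is $0$---and the term $\epsilon\|x\|_2^2$, hence $V$ is $2\epsilon$-strongly convex; since $\sigma_{k+1}\ge 0$ with $\sigma_{k+1}(0)=0$ we get $V(x)\ge\epsilon\|x\|_2^2>0=V(0)$, so the origin is the unique global minimiser, $V$ is positive definite, and $c_1=\epsilon$; and the smoothed activations are $C^1$, so $V\in C^1$. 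Two further facts are needed and take a little more work: (i) a matching quadratic \emph{upper} bound $V(x)\le c_2\|x\|_2^2$, obtained by noting that with bounded weights the (asymptotically affine) smoothed-ReLU activation makes an ICNN grow at most linearly, while the outer $\sigma_{k+1}$ is quadratic near $0$, so $\sigma_{k+1}(g(x)-g(0))=O(\|x\|_2^2)$ near the origin and is dominated by $\|x\|_2^2$ away from it; and (ii) the gradient lower bound $\|\nabla V(x)\|_2\ge 2\epsilon\|x\|_2$, immediate from $2\epsilon$-strong convexity together with $\nabla V(0)=0$. For the warped variant \eqref{eq:V_definition2}, invertibility of $F$ is exactly what rules out spurious stationary points of $g\circ F$ (as already argued in the text), while the $\epsilon\|x\|_2^2$ term again delivers positive definiteness, strong convexity, and (ii); (i) carries over under a mild growth assumption on $F$.

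\textbf{Well-definedness of $f$.} Off the origin $\nabla V(x)\ne 0$ by strong convexity, so $f$ from \eqref{eq:dynamics} is continuous there. The only delicate point---and the one I expect to need the most care---is the behaviour as $x\to 0$, where $\nabla V$, $V$, and the $\mathrm{ReLU}$ argument all vanish and the formula is a priori $0/0$. Using $\mathrm{ReLU}\bigl(\nabla V(x)^T\hat f(x)+\alpha V(x)\bigr)\le\|\nabla V(x)\|_2\,\|\hat f(x)\|_2+\alpha V(x)$ and then (i)--(ii), the correction term is bounded by $\|\hat f(x)\|_2+\tfrac{\alpha c_2}{2\epsilon}\|x\|_2$, so $f$ is bounded on bounded sets; setting $f(0):=0$ makes the origin an equilibrium and, since $V$ is coercive, solutions exist for all $t\ge 0$ and remain in a compact sublevel set. (One does not get continuity of $f$ at the origin in general, but this is not needed for the conclusion.)

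\textbf{Decrease condition and conclusion.} For $x\ne 0$ a two-case inspection of \eqref{eq:dynamics} gives $\nabla V(x)^T f(x)=\nabla V(x)^T\hat f(x)-\mathrm{ReLU}\bigl(\nabla V(x)^T\hat f(x)+\alpha V(x)\bigr)$, which equals $\nabla V(x)^T\hat f(x)\le-\alpha V(x)$ when the $\mathrm{ReLU}$ argument is nonpositive and equals $-\alpha V(x)$ otherwise; either way $\dot V(x)=\nabla V(x)^T f(x)\le-\alpha V(x)$. Hence along any trajectory the comparison lemma yields $V(x(t))\le V(x(0))e^{-\alpha t}$, and combining with $c_1\|x\|_2^2\le V(x)\le c_2\|x\|_2^2$ gives $\|x(t)\|_2^2\le\tfrac{c_2}{c_1}\|x(0)\|_2^2 e^{-\alpha t}$, i.e.\ $\|x(t)\|_2\le\sqrt{c_2/\epsilon}\,\|x(0)\|_2\,e^{-\alpha t/2}$ for every $x(0)\in\mathbb{R}^n$---global exponential stability to $x=0$, with the constants depending only on $\epsilon$, $\alpha$, and the (bounded) weight magnitudes of $\hat f$ and $V$, not on the particular networks chosen.
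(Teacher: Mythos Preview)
Your proposal is correct and follows essentially the same route as the paper: establish the quadratic sandwich $\epsilon\|x\|_2^2\le V(x)\le c_2\|x\|_2^2$, observe that the projection in \eqref{eq:dynamics} forces $\nabla V(x)^T f(x)\le-\alpha V(x)$ pointwise, integrate to $V(x(t))\le V(x(0))e^{-\alpha t}$, and read off the exponential bound. Your treatment is in fact more careful than the paper's in two respects---you give an explicit strong-convexity argument for the gradient lower bound (the paper simply asserts $\epsilon\|x\|_2\le\|\nabla V(x)\|_2$), and you discuss well-definedness of $f$ near the origin---and your final constant $\sqrt{c_2/\epsilon}$ is the correct one (the paper's $M/\epsilon$ is missing a square root). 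One small slip: for the warped variant \eqref{eq:V_definition2} you cannot invoke strong convexity of $V$, since $g\circ F$ need not be convex; the gradient lower bound there requires a separate argument (which the paper also does not supply).
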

\begin{proof}
The proof is straightforward, and relies on the properties of the networks created above.  First, note that by our definitions we have, for some $M$,
\begin{equation}
    \epsilon \|x\|^2_2 \leq V(x) \leq M \|x\|_2^2
\end{equation}
where the lower bound follows by definition and the fact that $g$ is positive. The upper bound follows from the fact that the $\sigma$ activation as defined is linear for large $x$ and quadratic around 0.  This fact in turn implies that $V(x)$ behaves linearly as $\|x\|\rightarrow \infty$, and is quadratic around the origin, so can be upper bounded by some quadratic $M \|x\|_2^2$.

The fact the $V$ is continuously differentiable means that $\nabla V(x)$ (in $f$) is defined everywhere, bounds on $\|\nabla V(x)\|_2^2$ for all $x$ follows from the the Lipschitz property of $V$, the fact that $0 \leq \sigma'(x) \leq 1$, and the $\epsilon \|x\|_2^2$ term
\begin{equation}
    \epsilon \|x\|_2 \leq  \|\nabla V(x)\|_2 \leq \sum_{i=1}^k \prod_{j=i}^k \|U_j\|_2 \|W_i\|_2
\end{equation}

where $\|\cdot\|_2$ denotes the operator norm when applied to a matrix.  This implies that the dynamics are defined and bounded everywhere owing to the choice of function $\hat{f}$.

Now, consider some initial state $x(0)$.  The definition of $f$ implies that
\begin{equation}
    \frac{d}{dt} V(x(t)) = \nabla V(x)^T \frac{d}{dt} x(t) = \nabla V(x)^T f(x) \leq -\alpha V(x(t)).
\end{equation}
Integrating this equation gives the bound
\begin{equation}
    V(x(t)) \leq V(x(0))e^{-\alpha t}
\end{equation}
and applying the lower and upper bounds gives
\begin{equation}
\epsilon \|x(t)\|_2^2 \leq M \|x(0)\|_2^2 e^{-\alpha t}
\; \Longrightarrow \; \|x(t)\|_2 \leq \frac{M}{\epsilon} \|x(0)\|_2 e^{-\alpha t/2}
\end{equation}
as required for global exponential convergence.
\end{proof}

\section{Empirical results}

We illustrate our technique on several example problems, first highlighting the (inherent) stability of the method for random networks, demonstrating learning on simple $n$-link pendulum dynamics, and finally learning high-dimensional stable latent space dynamics for dynamic video textures via a VAE model.

\subsection{Random networks}

\begin{figure}
    \centering
\begin{tikzpicture}
\begin{groupplot}[
group style={
group name=my plots,
group size=4 by 1,
ylabels at=edge left,
yticklabels at=edge left,
horizontal sep=12pt
},
axis on top,
width=1.in,
height=1.in,
scale only axis,
enlargelimits=false,
xmin=-2,
xmax=2,
ymin=-2,
ymax=2,
]

\nextgroupplot[title={Nominal $\hat f$}]
\addplot[] graphics[xmin=-2,ymin=-2,xmax=2.2,ymax=2.2] {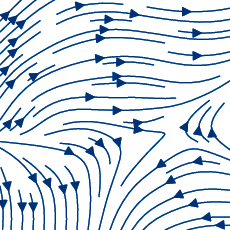};
\nextgroupplot[title={Lyapunov Function $V$}]
\addplot[] graphics[xmin=-2,ymin=-2,xmax=2,ymax=2] {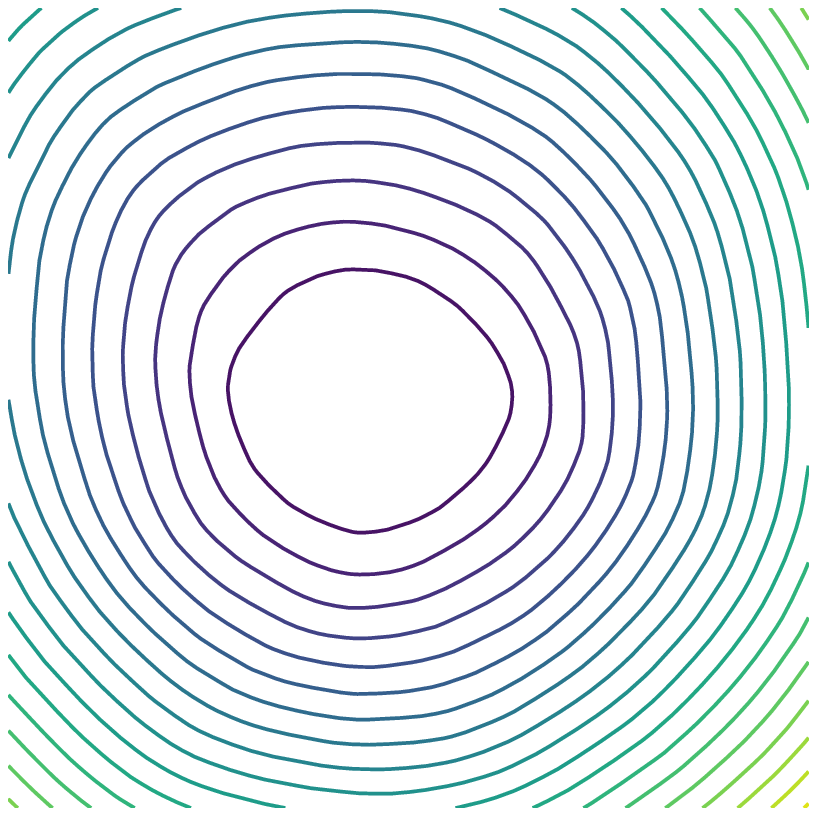};
\nextgroupplot[axis equal image, axis lines=none, xtick=\empty, ytick=\empty]
\addplotgraphicsnatural [xmin=-2, xmax=2, ymin=-2, ymax=2] {./figures/pendulum/nn-lyapunov-cmap};
\nextgroupplot[title={Stable $f$}]
\addplot[] graphics[xmin=-2,ymin=-2,xmax=2.2,ymax=2.2] {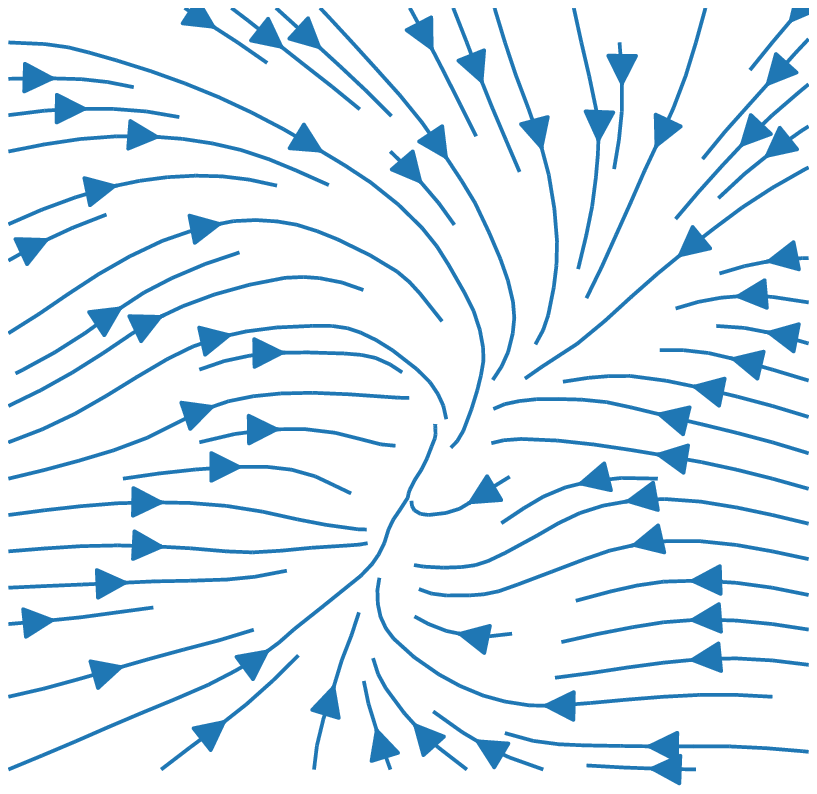};
\end{groupplot}
\end{tikzpicture}
\caption{(left) Nominal dynamics $\hat{f}$ for random network; (center) Convex positive definite Lyapunov function generated by random ICNN with constraints from Section \ref{sec:lyapunov_properties}; (right) Resulting stable dynamics $f$.}
    \label{fig:nominal_dynamics}
\end{figure}

Although we mention this only briefly, it is interesting to visualize the dynamics created by random networks according to our process, i.e., before any training at all.  Because the dynamics models are inherently stable, these random networks lead to stable dynamics with interesting behaviors, illusrated in  Figure \ref{fig:nominal_dynamics}.  Specifically, we let $\hat{f}$ be defined by a 2-100-100-2 fully connected network, and $V$ be a 2-100-100-1 ICNN, with both networks initialized via the default weights of PyTorch (the Kaiming uniform initialization \citep{he2015delving}) and with the ICNN having it's $U$ weights further put through a softplus unit to make them positive.

\subsection{$n$-link pendulum}

Next we look at the ability of our approach to model a physically-based dynamical system, specifically the $n$-link pendulum.  A damped, rigid $n$-link pendulum's state $x$ can be described by the angular position $\theta_i$ and angular velocity $\theta_i$ of each link $i$.  As before $\hat f$ is a $2n$-100-100-$2n$ network, and the Lyapunov function $V$ is a $2n$-60-60-1 ICNN with properties described in Section~\ref{sec:lyapunov_properties}. Models are trained with pairs of data $(x, \dot x)$ produced by the symbolic algebra solver \texttt{sympy}, using simulation code adapted from \cite{vanderplas_2017}.

\begin{figure}
    \centering
    \begin{tikzpicture}
        \begin{groupplot}[
            group style={
                group name=my plots,
                group size=4 by 1,
                ylabels at=edge left,
                yticklabels at=edge left,
                horizontal sep=18pt
            },
            axis on top,
            width=1.in,
            height=1.in,
            scale only axis,
            enlargelimits=false,
            xmin=-2,
            xmax=2,
            ymin=-2,
            ymax=2,
            ]

        \nextgroupplot[title={Simulated}]
        \addplot[] graphics[xmin=-2,ymin=-2,xmax=2,ymax=2] {./figures/pendulum/true-stream};
        \nextgroupplot[title={Learned $f$}]
        \addplot[] graphics[xmin=-2,ymin=-2,xmax=2,ymax=2] {./figures/pendulum/nn-stream};
        \nextgroupplot[title={Learned $V$}]
        \addplot[] graphics[xmin=-2,ymin=-2,xmax=2,ymax=2] {./figures/pendulum/nn-lyapunov};
        \nextgroupplot[axis equal image, axis lines=none, xtick=\empty, ytick=\empty]
        \addplotgraphicsnatural [xmin=-2, xmax=2, ymin=-2, ymax=2] {./figures/pendulum/nn-lyapunov-cmap};
    \end{groupplot}
    \end{tikzpicture}
    \caption{Dynamics of a simple damped pendulum. From left to right: the dynamics as simulated from first principles, the dynamics model $f$ learned by our method, and the Lyapunov function $V$ learned by our method (under which $f$ is non-expansive).}
    \label{fig:pendulum_experiment}
\end{figure}
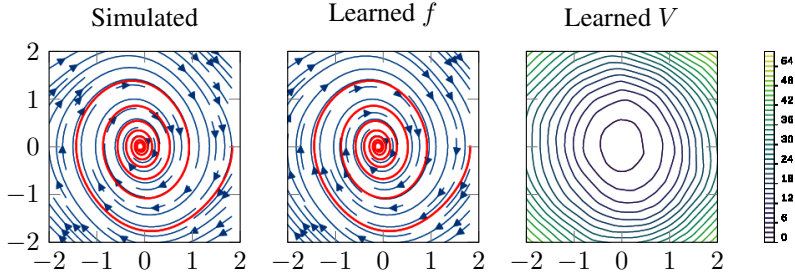{}

In Figure~\ref{fig:pendulum_experiment}, we compare the simulated dynamics with the learned dynamics in the case of a simple damped pendulum (i.e. with $n=1$), showing both the streamplot of the vector field and a single simulated trajectory, and draw a contour plot of the learned Lyapunov function. As seen, the system is able to learn dynamics that can accurately predict motion of the system even over long time periods.

\pgfplotstableread{figures/pendulum_results/nlinkerror_simple.dat}{\nlinkerrorsimple}
\pgfplotstableread{figures/pendulum_results/nlinkerror_rehu.dat}{\nlinkerrorrehu}
\pgfplotstableread{figures/pendulum_results/8_simple.dat}{\pendressimple}
\pgfplotstableread{figures/pendulum_results/8_REHU_0.005.dat}{\pendressrehu}
\pgfplotstableread{figures/pendulum_results/8_lstm.dat}{\pendresslstm}

\begin{figure}
    \begin{tikzpicture}
        \begin{groupplot}[
            group style={
                group name=my plots,
                group size=2 by 1,
                ylabels at=edge left,
                yticklabels at=edge left,
                horizontal sep=12pt
            },
            axis on top,
            height=1.2in,
            width=2.35in,
            scale only axis,
            enlargelimits=false,
            ymin=0, ymax=100000, ymode=log
            ]

    \nextgroupplot[title={\shortstack{Error at each time\\ for 8-link pendulums}},
        axis on top,
        scale only axis,
        enlargelimits=false,
        xmin=0, ymin=1, xmax=1000,
        ylabel near ticks,
        legend pos=north west,
        xlabel={Timestamp},
        ylabel={Error}]
        
        \addplot+[y filter/.code={\pgfmathparse{#1/500}\pgfmathresult}]
            table [x={t}, y={loss}] {\pendressimple};
        \addplot+[y filter/.code={\pgfmathparse{#1/500}\pgfmathresult}]
            table [x={t}, y={loss}] {\pendressrehu};
        \node at (axis cs:400,1000) [anchor=west] {Simple};
        \node at (axis cs:800,3) [anchor=south] {Stable};

    \nextgroupplot[title={\shortstack{Average error over 999 timesteps\\ for $n$-link pendulums}},xmin=0.5,xmax=8.5,xlabel={Number of links $n$}, xtick=data]
        \addplot+[y filter/.code={\pgfmathparse{#1/500000}\pgfmathresult}]
            table [x={n}, y={loss}] {\nlinkerrorsimple};
        \addplot+[y filter/.code={\pgfmathparse{#1/500000}\pgfmathresult}]
            table [x={n}, y={loss}] {\nlinkerrorrehu};

        \node at (axis cs:4,2000) [anchor=west] {Simple};
        \node at (axis cs:6,5) [anchor=south] {Stable};
    
\end{groupplot}
\end{tikzpicture}

\caption{Error in predicting $
\theta, \dot \theta$ in 8-link pendulum at each timestep (left); and average error over 999 timesteps as the number of links in the pendulum increases (right).}
    \label{fig:pendulum_results}
\end{figure}

We also evaluate the learned dynamics quantitatively varying $n$ and the time horizon of simulation. Figure~\ref{fig:pendulum_results} presents the total error over time for the 8-link pendulum, and the average cumulative error over 1000 time steps for different values of $n$.  While both the simple and our stable models show increasing mean error at the start of the trajectory, our model is able to capture the contraction in the physical system (implied by conservation of energy) and in fact exhibits decreasing error towards the end of the simulation (the true and simulated dynamics are both stable). In comparison, the error in the simple model increases.

\subsection{Video Texture Generation}

Finally, We apply our technique to stable video texture generation, using a Variational Autoencoder (VAE) \citep{kingma2013auto} to learn an encoding for images, and our stable network to learn a dynamics model in encoding-space.  Given a sequence of frames $(y_0, y_1, \ldots)$, we feed the network the frame at time $t$ 
and train it to reconstruct the frames at time $t$ and $t+1$.  Specifically, we consider a VAE defined by the encoder $e : \mathcal{Y} \rightarrow \mathbb{R}^{2n}$ giving mean and variance $\mu, \log \sigma^2_t = e(y_t)$, latent state $z_t \in \mathbb{R}^n \sim \mathcal{N}(\mu_t, \sigma_t^2)$, and decoder $d: \mathbb{R}^n \rightarrow \mathcal{Y}$, $y_t \approx d(z_t)$.  We train the network to minimize both the standard VAE loss (reconstruction error plus a KL divergence term), but \emph{also} minimize the reconstruction loss of a next predicted state. We model the evolution of the latent dynamics at $z_{t+1} \approx f(z_t)$, or more precisely $y_{t+1} \approx d(f(z_t))$.  In other words, as illustrated in Figure~\ref{fig:vae_training}, we train the full system to minimize
\begin{equation}
\minimize_{e,d,\hat{f},V} \sum_{t=1}^{T-1} \biggl ( \mathsf{KL}(\mathcal{N}(\mu_t,\sigma_t^2 I\|\mathcal{N}(0,I)) + \mathbf{E}_z \bigl [ \|d(z_t) - y_t\|_2^2 + \| d(f(z_t)) - y_{t+1}\|_2^2 \bigr ] \biggr )
\end{equation}

\begin{figure}
\begin{tikzpicture}
        \begin{axis}[
            axis on top,
            scale only axis,
            enlargelimits=false,
            xmin=0, ymin=0, 
            xmax=527.577, ymax=134.747,
            width=5.5in,
            height=1.404in,
            yticklabels={,,},
            xticklabels={,,},
            axis line style={draw=none},
            tick style={draw=none}]
    \addplot[] graphics[xmin=0, ymin=0, xmax=527.577, ymax=134.747] {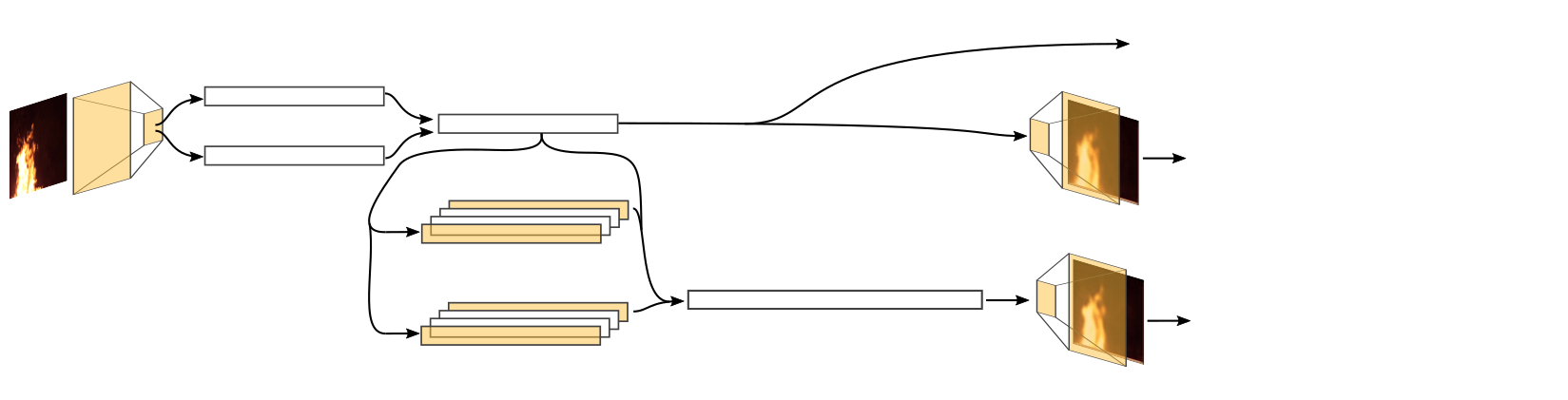};
        \node (A) [anchor=north] at (axis cs:40,74){$e(y_t)$};
        \node (D) [anchor=south west] at (axis cs:66,102){$\mu_t$};
        \node (E) [anchor=north west] at (axis cs:66,82){$\log\sigma_t$};
        \node (H) [anchor=south west] at (axis cs:160,64){\small{$\hat f(z_t)$}};
        \node (H) [anchor=south west] at (axis cs:160,28){\small{$V(z_t)$}};
        \node (F) [anchor=south] at (axis cs:196,92){$z_t \in \mathcal{N}(\mu_t,\sigma_t^2)$};
        \node (J) [anchor=south] at (axis cs:286,35){\footnotesize{$z_{t+1} \gets z_t + f(z_t)$}};
        \node (B) [anchor=north east] at (axis cs:392,72){\footnotesize{$d(z_t)$}};
        \node (B) [anchor=north east] at (axis cs:392,20){\footnotesize{$d(z_{t+1})$}};
        \node (I) [anchor=west] at (axis cs:380,118){{KL}($\mathcal{N}(\mu_t,\sigma_t^2)\| \mathcal{N}(0,I))$};
        \node (K) [anchor=west] at (axis cs:396,82){$\|d(z_t) - y_{t}\|_2^2$};
        \node (K) [anchor=west] at (axis cs:396,28){$\|d(z_{t+1}) - y_{t+1}\|_2^2$};
    \end{axis}
    \end{tikzpicture}
    \caption{Structure of our video texture generation network. The encoder $e$ and decoder $d$ form a Variational Autoencoder, and the stable dynamics model $f$ is trained together with the decoder to predict the next frame in the video texture.}
    \label{fig:vae_training}
\end{figure}

\setlength{\tabcolsep}{2pt}
\newcommand{\sampletbl}[2]{
\includegraphics[width=0.4in]{figures/results/#1-#2/fr_00000.png} &
\includegraphics[width=0.4in]{figures/results/#1-#2/fr_00010.png} &
\includegraphics[width=0.4in]{figures/results/#1-#2/fr_00020.png} &
\includegraphics[width=0.4in]{figures/results/#1-#2/fr_00030.png} &
\includegraphics[width=0.4in]{figures/results/#1-#2/fr_00040.png} &
\includegraphics[width=0.4in]{figures/results/#1-#2/fr_00050.png}  &
\includegraphics[width=0.4in]{figures/results/#1-#2/fr_00100.png} &
\includegraphics[width=0.4in]{figures/results/#1-#2/fr_00150.png} &
\includegraphics[width=0.4in]{figures/results/#1-#2/fr_00200.png} &
\includegraphics[width=0.4in]{figures/results/#1-#2/fr_00250.png}
}
\begin{figure}
    \centering
    \begin{tikzpicture}
        \begin{groupplot}[
            group style={
                group name=my plots,
                group size=5 by 1,
                horizontal sep=12pt
            },
            axis on top,
            width=1.in,
            height=1.in,
            scale only axis,
            enlargelimits=false,
            xmin=0,
            xmax=1,
            ymin=0,
            ymax=1,
            axis x line*=bottom, axis y line*=left,
            ]

        \nextgroupplot[title={Stable Model Run 1},
        ytick={.1, 1},
        yticklabels={-2,18},
        xtick={.15, 1},
        xticklabels={$-10$,15}]
        \addplot[] graphics[xmin=-0.17,ymin=-0.17,xmax=1.17,ymax=1.17] {./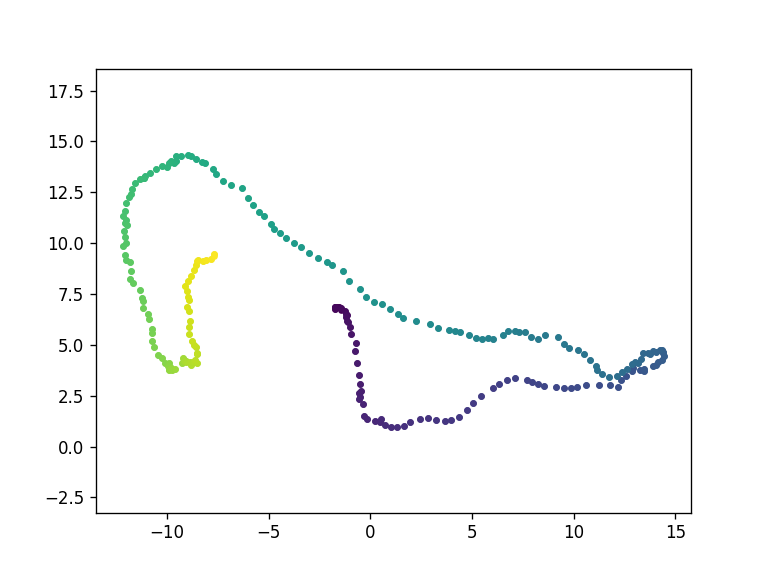};

        \nextgroupplot[title={Stable Model Run 2},
        ytick={.1, 1},
        yticklabels={0,$20$},
        xtick={.15, 1},
        xticklabels={$0$,25}]
    \addplot[] graphics[xmin=-0.17,ymin=-0.17,xmax=1.17,ymax=1.17] {./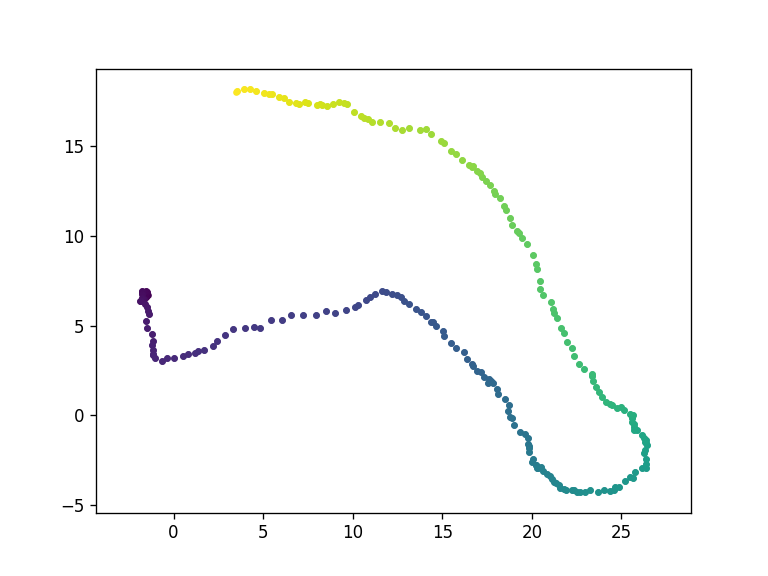};

        \nextgroupplot[title={Stable Model Run 3},
        ytick={.1, 1},
        yticklabels={-10,8},
        xtick={.1, .8},
        xticklabels={-5,15}]
        ]
        \addplot[] graphics[xmin=-0.17,ymin=-0.17,xmax=1.17,ymax=1.17] {./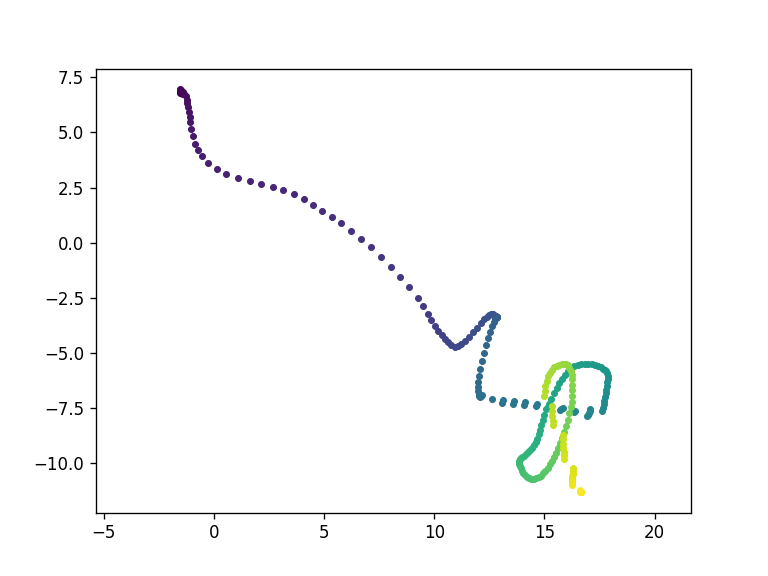};

        \nextgroupplot[title={Naive Model},
        ytick={.1, 1},
        yticklabels={0,},
        xtick={.1, 1},
        xticklabels={$-2 \times 10^{30}$,0},
            extra y tick labels={$1.2 \times 10^{30}$},
            extra y ticks={1.0},
            extra y tick style={y tick label style={right, xshift=0.25em}},]
        \addplot[] graphics[xmin=-0.17,ymin=-0.17,xmax=1.17,ymax=1.17] {./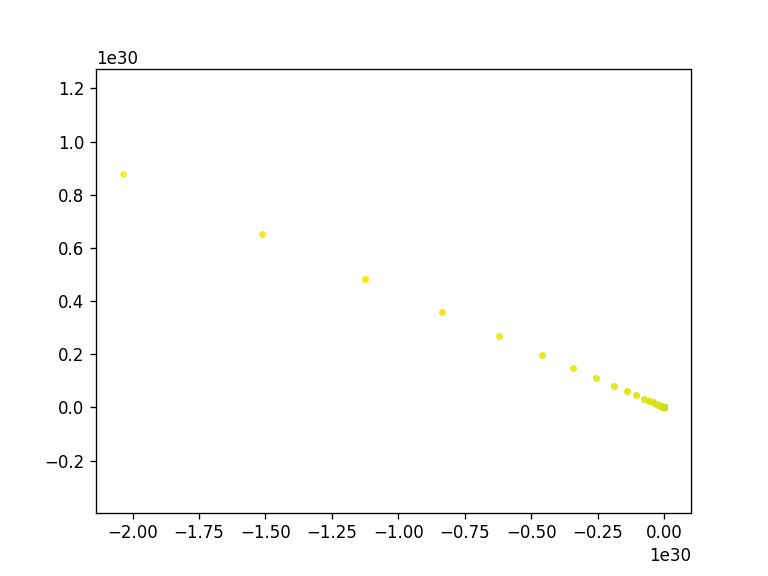};
        
        \nextgroupplot[
            ytick={0,1},
            yticklabels={0,300},
            ylabel={steps},
            width=.05in,
            axis line style={draw=none}, tick style={draw=none}, xticklabel=\empty, every axis y label/.style={at={(current axis.west)},rotate=90,yshift=-3mm,xshift=1mm},
            y tick label style={right, xshift=0.25em}]

        \addplot[] graphics[xmin=0,ymin=0,xmax=1,ymax=1] {./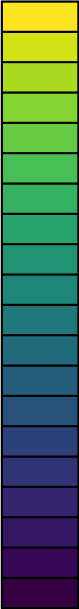};

    \end{groupplot}
    \end{tikzpicture}
    \begin{tabular}{r|cccccc|cccc}
    Stable & \multicolumn{10}{c}{Frame Number} \\
    Model & $0$ & ${10}$ & ${20}$ & ${30}$ & ${40}$ & ${50}$ & ${100}$ & ${150}$ & ${200}$ & ${250}$ \\
    Run 1 &\sampletbl{bonfire}{exp27r-1e-5-main} \\
    Run 2 &\sampletbl{bonfire}{exp27r-1e-5-1231132} \\
    Run 3 &\sampletbl{bonfire}{exp27r-1e-5-1238886} \\
    \shortstack{Naive\\Model} &
    \sampletbl{bonfire}{bad2}
    \end{tabular}

    \caption{Samples generated by our stable video texture networks, with associated trajectories above. The true latent space is 320-dimensional; we project the trajectories onto a two-dimensional plane for display. For comparison, we present the video texture generated using an unconstrained neural network in place of our stable dynamics model.}
    \label{fig:vae_results}
\end{figure}

We train the model on pairs of successive frames sampled from videos. To generate video textures, we seed the dynamics model with the encoding of a single frame and numerically integrate the dynamics model to obtain a trajectory. The VAE decoder converts each step of the trajectory into a frame.   In Figure~\ref{fig:vae_results}, we present sample stable trajectories and frames produced by our network. For comparison, we also include an example trajectory and resulting frames when the dynamics are modelled without the stability constraint (i.e. letting $f$ in the above loss be a generic neural network).  For the naive model, the dynamics quickly diverge and produce a static image, whereas for our approach, we are able to generate different (stable) trajectories that keep generating realistic images over long time horizons.

\section{Conclusion}
In this paper we  proposed a method for learning stable non-linear dynamical systems defined by neural network architectures.  The approach jointly learns a convex positive definite Lyapunov function along with dynamics constrained to be stable according to these dynamics everywhere in the state space.  We show that these models can be integrated into other deep architectures such as VAEs, and learn complex latent space dynamics is a fully end-to-end manner.  Although we have focused here on the autonomous (i.e., uncontrolled) setting, the method opens several directions for future work, such as integration into dynamical systems for control or reinforcement learning settings.  Have stable systems as a ``primitive'' can be useful in a large number of contexts, and combining these stable systems with the representational power of deep networks offers a powerful tool in modeling and controlling dynamical systems.

\medskip
\small
\bibliographystyle{plain}
\bibliography{main}

\end{document}